\newtheorem{theorem}{Theorem}
\newtheorem{corollary}{Corollary}
\numberwithin{equation}{section}
\begin{document}

\title{Approximation Rates for Shallow ReLU$^k$ Neural Networks on Sobolev Spaces via the Radon Transform
}

\author{
Tong Mao \\
   Computer, Electrical and Mathematical Science and Engineering Division\\
   King Abdullah University of Science and Technology\\
   Thuwal 23955, Saudi Arabia \\
   \texttt{tong.mao@kaust.edu.sa} \\
\And Jonathan W. Siegel \\
 Department of Mathematics\\
 Texas A\&M University\\
 College Station, TX 77843 \\
 \texttt{jwsiegel@tamu.edu}
   \And Jinchao Xu \\
   Computer, Electrical and Mathematical Science and Engineering Division\\
   King Abdullah University of Science and Technology\\
   Thuwal 23955, Saudi Arabia \\
   \texttt{jinchao.xu@kaust.edu.sa} \\
}

\maketitle

\begin{abstract}
    Let $\Omega\subset \mathbb{R}^d$ be a bounded domain. We consider the problem of how efficiently shallow neural networks with the ReLU$^k$ activation function can approximate functions from Sobolev spaces $W^s(L_q(\Omega))$ with error measured in the $L_p(\Omega)$-norm. Utilizing the Radon transform and recent results from discrepancy theory, we provide a simple proof of nearly optimal approximation rates in a variety of cases, including when $p\leq q$, $q\geq 2$, and $s \leq k + (d+1)/2$. The rates we derive are optimal up to logarithmic factors, and significantly generalize existing results. An interesting consequence is that the adaptivity of shallow ReLU$^k$ neural networks enables them to obtain optimal approximation rates for smoothness up to order $s = k + (d+1)/2$, even though they represent piecewise polynomials of fixed degree $k$.
\end{abstract}

\section{Introduction}
We consider the problem of approximating a target function $f:\Omega\rightarrow \mathbb{R}$, defined on a bounded domain $\Omega\subset \mathbb{R}^d$, by shallow ReLU$^k$ neural networks of width $n$, i.e. by an element from the set
\begin{equation}
    \Sigma_n^k(\mathbb{R}^d) := \left\{\sum_{i=1}^n a_i\sigma_k(\omega_i\cdot x + b_i),~a_i,b_i\in \mathbb{R}, \omega_i\in \mathbb{R}^d\right\},
\end{equation}
where the ReLU$^k$ activation function $\sigma_k$ is defined by
\begin{equation}
    \sigma_k(x) = \begin{cases}
        0 & x \leq 0\\
        x^k & x > 0.
    \end{cases}
\end{equation}
We remark that when $d = 1$, the class of shallow ReLU$^k$ neural networks is equivalent to the set of variable knot splines of degree $k$. For this reason, shallow ReLU$^k$ neural networks are also called ridge splines and form a higher dimensional generalization of variable knot splines. 
The approximation theory of shallow ReLU$^k$ neural networks has been heavily studied due to their relationship with neural networks and their success in machine learning and scientific computing (see for instance \cite{barron1993universal,bach2017breaking,lecun2015deep,petrushev1998approximation,devore1996approximation,devore2021neural,klusowski2018approximation,mhaskar2020kernel,ma2022barron,siegel2022sharp,konyagin2018some} and the references therein). Despite this effort, many important problems remain unsolved. Notably, a determination of sharp approximation rates for shallow ReLU$^k$ neural networks on classical smoothness spaces, in particular Sobolev spaces, has not been completed except when $d=1$ (the theory of variable knot splines in one dimension is well developed and can be found in \cite{devore1993constructive,kahane1961teoria}, for instance).

To simplify the presentation, we will only consider the case where $\Omega$ is the unit ball in $\mathbb{R}^d$, i.e., we will assume
\begin{equation}\label{unit-ball-definition-equation}
\Omega := \mathbb{B}_1^d := \{x\in \mathbb{R}^d:~|x| < 1\}.
\end{equation}
We remark that our techniques give the same results for more general domains $\Omega$ by utilizing appropriate Sobolev extension theorems (see for instance \cite{adams2003sobolev,evans2010partial,devore1993constructive,stein1970singular,maz2013sobolev}), but in this work we will not address the technical question of precisely which assumptions must be made on the domain $\Omega$.

Let $s \geq 1$ be an integer. We define the Sobolev spaces $W^s(L_q(\Omega))$ via the norm
\begin{equation}\label{integral-Sobolev-norm-definition}
    \|f\|_{W^s(L_q(\Omega))} = \|f\|_{L_q(\Omega)} + \sum_{|\alpha| = s}\|f^{(\alpha)}\|_{L_q(\Omega)},
\end{equation}
where the sum is over multi-indices $\alpha$ with weight $s$, and $f^{(\alpha)}$ denotes the weak derivative of $f$ of order $\alpha$. 
Sobolev spaces are central objects in analysis and the theory of PDEs (see for instance \cite{evans2010partial,maz2013sobolev,adams2003sobolev}). 

We remark that (fractional) Sobolev spaces can be defined for non-integral $\alpha$ (see \cite{di2012hitchhikers}), and the more general Besov spaces can also be used to quantify non-integral smoothness as well \cite{devore1988interpolation,devore1993constructive,devore1993besov}. To keep the present paper as self-contained and simple as possible, and to clarify the main ideas, we will restrict ourselves to Sobolev spaces of integral order in the following. We pose the rigorous extension of our techniques to non-integral smoothness as an open problem.

However, there is one instance where we will need to consider fractional Sobolev spaces, and this is in the Hilbert space case when $q = 2$. In this case it is well known that if the domain is all of $\mathbb{R}^d$, then the (integral order) Sobolev norm can be conveniently characterized via the Fourier transform, specifically
\begin{equation}\label{sobolev-fourier-characterization}
    \|f\|^2_{W^s(L_2(\mathbb{R}^d))} \eqsim \int_{\mathbb{R}^d} (1 + |\xi|)^{2s}|\hat{f}(\xi)|^2d\xi,
\end{equation}
with semi-norm given by
\begin{equation}
    |f|^2_{W^s(L_2(\mathbb{R}^d))} \eqsim \int_{\mathbb{R}^d} |\xi|^{2s}|\hat{f}(\xi)|^2d\xi,
\end{equation}
where $\hat{f}$ denotes the Fourier transform of $f$ defined by (see \cite{di2012hitchhikers,adams2003sobolev})
\begin{equation}
    \hat{f}(\xi) := \int_{\mathbb{R}^d} e^{i\xi \cdot x} f(x)dx.
\end{equation}
Using this fact, we can define fractional order Sobolev spaces on all of $\mathbb{R}^d$ by letting $s$ be an arbitrary real number in \eqref{sobolev-fourier-characterization}. When restricting to the domain $\Omega$ we will simply define the fractional order Sobolev spaces via restriction, i.e., we define
\begin{equation}\label{fraction-sobolev-index-2-definition}
    \|f\|_{W^s(L_2(\Omega))} := \inf \{\|f_e\|_{W^s(L_2(\mathbb{R}^d))}:~f_e(x) = f(x)~\text{on $\Omega$}\}.
\end{equation}
It is known that this is equivalent to other definitions of the fractional Sobolev spaces \cite{di2012hitchhikers}. In the present paper, we will avoid these technical issues and simply take \eqref{fraction-sobolev-index-2-definition} as the definition of the fraction Sobolev space with index $q = 2$. Note that by the well-known Sobolev extension theory (see \cite{evans2010partial,adams2003sobolev,maz2013sobolev,stein1970singular} for instance) this definition is equivalent to \eqref{integral-Sobolev-norm-definition} when $s$ is an integer and $q = 2$.

An important theoretical question is to determine optimal approximation rates for $\Sigma_n^k(\mathbb{R}^d)$ on the classes of Sobolev functions. Specifically, we wish to determine the approximation rates
\begin{equation}\label{general-approximation-rate-problem}
    \sup_{\|f\|_{W^s(L_q(\Omega))} \leq 1} \inf_{f_n\in \Sigma_n^k(\mathbb{R}^d)} \|f - f_n\|_{L_p(\Omega)}
\end{equation}
for different values of the parameters $s,p,q$ and $k$. When $d = 1$, the set of shallow neural networks $\Sigma_n^k(\mathbb{R})$ simply corresponds to the set of variable knot splines with at most $n$ breakpoints. In this case a complete theory follows from known results on approximation by variable knot splines \cite{petrushev1988direct,devore1998nonlinear,devore2021neural}. When $d > 1$, this problem becomes considerably more difficult, and only a few partial results are known.

Let us begin by giving an overview of the work that has been done on problem \eqref{general-approximation-rate-problem}, starting with upper bounds. The problem was first considered in the case $p=q=2$ in \cite{petrushev1998approximation,devore1996approximation}, where an upper bound of
\begin{equation}
        \inf_{f_n\in \Sigma_n^k(\mathbb{R}^d)} \|f - f_n\|_{L_2(\Omega)} \leq C\|f\|_{W^s(L_2(\Omega))}n^{-s/d}
\end{equation}
is proved when $s \leq (d+2k+1)/2$. Trivially, this upper bound also holds when $p \leq q = 2$.

Upper bounds when $q \neq 2$ are significantly more difficult to obtain. This was first done in \cite{bach2017breaking}, where an approximation rate of
\begin{equation}
    \inf_{f_n\in \Sigma_n^1(\mathbb{R}^d)} \|f - f_n\|_{L_\infty(\Omega)} \leq C\|f\|_{W^1(L_\infty(\Omega))}\left(\frac{n}{\log{n}}\right)^{-1/d}
\end{equation}
was proved for the class of Lipschitz functions $W^1(L_\infty(\Omega))$. We remark that, due to an error, the proof in \cite{bach2017breaking} is only correct when $d \geq 4$. This approach was extended in \cite{yang2024optimal} (see also \cite{mao2023rates,yang2023nonparametric}) to larger values of the smoothness $s$ and the logarithmic factor was removed, which gives the approximation rate
\begin{equation}
        \inf_{f_n\in \Sigma_n^k(\mathbb{R}^d)} \|f - f_n\|_{L_\infty(\Omega)} \leq C\|f\|_{W^s(L_\infty(\Omega))}n^{-s/d}
\end{equation}
for all $s < (d+2k+1)/2$.

Next, let us turn to lower bounds on the approximation rates in \eqref{general-approximation-rate-problem}. These can be obtained using either the VC-dimension or pseudo-dimension of the class of shallow neural networks $\Sigma_n^k(\mathbb{R}^d)$ (see \cite{siegel2022optimal,bartlett2019nearly,maiorov2000near,konyagin2018some}), and this method gives a lower bound of
\begin{equation}
    \sup_{\|f\|_{W^s(L_q(\Omega))}} \inf_{f_n\in \Sigma_n^k(\mathbb{R}^d)} \|f - f_n\|_{L_p(\Omega)} \geq C(n\log(n))^{-s/d}
\end{equation}
for all $s,d,k,p$ and $q$. This implies that the aforementioned upper bounds are tight up to logarithmic factors. Removing the remaining logarithmic gap here appears to be a difficult problem. 

We remark that the preceding results only addressed the regime where $s \leq k + (d+1)/2$. When $s > k + (d+1)/2$ these problems are open and we expect that the approximation rates in \eqref{general-approximation-rate-problem} will be significantly worse than $O(n^{-s/d})$. These prior results and the rates proved in this work are summarized in Table \ref{approximation-rates-table}.

Further, we remark that when approximating functions from a Sobolev space $W^s(L_q(\Omega))$ in $L_p$ there is a significant difference depending upon whether $q \geq p$ or $q < p$. In the former case, linear methods of approximation are able to achieve an optimal approximation rate, while when $q < p$ non-linear methods are required \cite{devore1998nonlinear,lorentz1996constructive}. For shallow ReLU$^k$ neural networks, existing approximation results have exclusively been obtained in the linear regime when $q \geq p$. Fully understanding approximation by shallow ReLU$^k$ neural networks in the non-linear regime when $q < p$ appears to be a very difficult open problem.

\begin{table}\label{approximation-rates-table}
\begin{center}
\begin{tabular}{|c|c|c|c|c|c|c|}
    \hline
     & \multicolumn{4}{|c|}{$p\leq q$} & \multicolumn{2}{|c|}{$p>q$}\\
    \cline{2-7}
     & $1\leq q < 2$ & $q = 2$ & $2 < q < \infty$ & $q = \infty$ & $1\leq q < 2$ & $2\leq q\leq \infty$\\
    \hline
    $s < k + (d+1)/2$ & & $O(n^{-s/d})$ from \cite{petrushev1998approximation} & $O(n^{-s/d})$ & $O(n^{-s/d})$ from \cite{yang2024optimal} & &\\
    \hline
    $s = k + (d+1)/2$ & & $O(n^{-s/d})$ from \cite{petrushev1998approximation} & $O(n^{-s/d})$ & $O(n^{-s/d})$ & & $O(n^{-s/d})$\\
    \hline
\end{tabular}
\end{center}
\caption{A summary of existing upper bounds on the approximation problem \eqref{general-approximation-rate-problem}. Entries without reference are results proved in this work and blank entries represent open problems. We have only listed terminal results, and previous results (which either proved weaker bounds or special cases) can be found in \cite{bach2017breaking,mao2023rates,yang2023nonparametric,devore1996approximation}. We remark that in all cases the best lower bound proved is $\Omega((n\log(n))^{-s/d})$. This matches the upper bounds in the table up to a small logarithmic factor, and closing this gap is a significant open problem. Finally, when $s > k + (d+1)/2$ the problem is also open, and in this regime we believe that improved lower bounds will be required.}
\end{table}

In this paper, we study approximation rates for shallow ReLU$^k$ neural networks on Sobolev spaces using recent approximation results on variation spaces (see \cite{siegel2023characterization,devore1998nonlinear,kurkova2001bounds,ma2022barron}). Let us briefly introduce the relevant background on variation spaces and describe our approach. The variation space corresponding to ReLU$^k$ neural networks is defined as follows. Let $\Omega\subset \mathbb{R}^d$ be the unit ball defined in \eqref{unit-ball-definition-equation} and consider the dictionary, i.e., set, of functions
\begin{equation}
    \mathbb{P}_k^d := \{\sigma_k(\omega\cdot x + b),~\omega\in S^{d-1},~b\in [-1,1]\}.
\end{equation}
See \cite{siegel2022sharp,siegel2023characterization} for details and intuition behind this definition. The set $\mathbb{P}_k^d$ consists of the possible outputs of each neuron given a bound on the inner weights. The unit ball of the variation space is the closed symmetric convex hull of this dictionary, i.e.,
\begin{equation}
    B_1(\mathbb{P}_k^d) = \overline{\left\{\sum_{i=1}^n a_id_i,~d_i\in \mathbb{P}_k^d,~\sum_{i=1}^n|a_i|\leq 1\right\}},
\end{equation}
where the closure can be taken in $L_2(\Omega)$. It is known that the closure is the same when taken in different norms, such as $L_p(\Omega)$ for $1\leq p\leq \infty$ (see \cite{siegel2023optimal,yang2023nonparametric}). Given the unit ball $B_1(\mathbb{P}_k^d)$, we may define the variation space norm via
\begin{equation}
 \|f\|_{\mathcal{K}_1(\mathbb{P}_k^d)} = \inf\{c > 0:~f\in cB_1(\mathbb{P}_k^d)\}.
\end{equation}
The variation space will be denoted
\begin{equation}\label{space-definition}
\mathcal{K}_1(\mathbb{P}_k^d) := \{f\in L_2(\Omega):~\|f\|_{\mathcal{K}_1(\mathbb{P}_k^d)} < \infty\}.
\end{equation}
We remark that the variation space can be defined for a general dictionary, i.e., bounded set of functions, $\mathbb{D}$ (see for instance \cite{devore1998nonlinear,kurkova2001bounds,kurkova2002comparison,siegel2022sharp,mhaskar2004tractability,mhaskar2024tractability}). This space plays an important role in non-linear dictionary approximation and the convergence theory of greedy algorithms \cite{devore1996some,temlyakov2008greedy,temlyakov2011greedy,siegel2022optimal}. In addition, the variation spaces $\mathcal{K}_1(\mathbb{P}_k^d)$ play an important role in the theory of shallow neural networks and have been extensively studied in different forms recently \cite{bach2017breaking,ma2022barron,parhi2021banach,parhi2022kinds,siegel2023characterization}.

An important question regarding the variation spaces is to determine optimal approximation rates for shallow ReLU$^k$ networks on the space $\mathcal{K}_1(\mathbb{P}_k^d)$. This problem has been studied in a series of works \cite{barron1993universal,makovoz1998uniform,makovoz1996random,klusowski2018approximation,bach2017breaking,ma2022uniform}, with the (nearly) optimal rate of approximation,
\begin{equation}\label{variation-space-approx-rate}
    \inf_{f_n\in \Sigma_n^k(\mathbb{R}^d)} \|f - f_n\|_{L_p} \leq C\|f\|_{\mathcal{K}_1(\mathbb{P}_k^d)}n^{-\frac{1}{2}-\frac{2k+1}{2d}},
\end{equation}
recently being obtained for $p=2$ in \cite{siegel2022sharp} and for $p=\infty$ in \cite{siegel2023optimal}. To be precise, this rate is optimal up to logarithmic factors, which is shown in \cite{siegel2022sharp} under a mild restriction on the weights, while the lower bound with no restrictions on the weights was shown in \cite{siegel2023optimal} (using the embedding Theorem \ref{main-embedding-theorem} proved in the present work).

We remark that obtaining the rate \eqref{variation-space-approx-rate} for $p=\infty$ requires deep tools from discrepancy theory, which are developed in \cite{siegel2023optimal}. Our approach in this paper will be to make use of these recently developed tools to obtain approximation rates on Sobolev spaces.
The key component of our analysis is the following embedding theorem, which we prove using a Radon space characterization of the variation space \cite{parhi2022kinds,parhi2021banach,ongie2019function}. This result can also be deduced from the spectral decay of the Gram kernel corresponding to the ReLU$^k$ activation function \cite{10.1093/imaiai/iaaf022}. We remark that a similar embedding theorem for the closely related spectral Barron space can be found in \cite{doi:10.1137/23M1598738}.
\begin{theorem}\label{main-embedding-theorem}
    Let $s = (d+2k+1)/2$. Then we have the embedding
    \begin{equation}
        W^s(L_2(\Omega)) \subset \mathcal{K}_1(\mathbb{P}_k^d).
    \end{equation}
\end{theorem}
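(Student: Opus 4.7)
The plan is to use the Radon transform representation of functions as superpositions of ridges (as developed in \cite{parhi2022kinds,parhi2021banach,ongie2019function}) to construct an explicit integral representation of $f$ against a measure on $S^{d-1}\times[-1,1]$. Specifically, for a smooth compactly supported extension $\tilde f$ of $f$ with $\|\tilde f\|_{W^s(L_2(\mathbb{R}^d))}\lesssim\|f\|_{W^s(L_2(\Omega))}$, I would invoke the ridgelet inversion formula of the form
\begin{equation*}
    \tilde f(x) = c_{k,d}\int_{S^{d-1}}\int_{\mathbb{R}} \sigma_k(\omega\cdot x - t)\, P\mathcal{R}\tilde f(\omega,t)\, dt\, d\omega + (\text{polynomial of degree }\le k),
\end{equation*}
where $P=\partial_t^{k+1}(-\partial_t^2)^{(d-1)/2}$ is a differential operator of total order $d+k$ acting in $t$. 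The coefficient measure is then $d\mu(\omega,t)=c_{k,d}P\mathcal{R}\tilde f(\omega,t)\,dt\,d\omega$, and $\|f\|_{\mathcal{K}_1(\mathbb{P}_k^d)}$ is controlled by $\|\mu\|_{\mathcal{M}}$ together with the $\mathcal{K}_1$-norm of the polynomial remainder.

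\textbf{Reduction to $L^2$.} Since $\tilde f$ is supported in a ball of radius $R>1$, the Radon transform $\mathcal{R}\tilde f(\omega,t)$ is supported in $|t|\le R$, so $P\mathcal{R}\tilde f$ is supported in the compact set $S^{d-1}\times[-R,R]$. By Cauchy--Schwarz,
\begin{equation*}
    \|P\mathcal{R}\tilde f\|_{L^1(S^{d-1}\times[-R,R])} \le C_{d,R}\,\|P\mathcal{R}\tilde f\|_{L^2(S^{d-1}\times\mathbb{R})}.
\end{equation*}
The contributions to the ridgelet integral from $|t|>1$ reduce, for $x\in\Omega$, to a polynomial of degree $\le k$ (because the argument $\omega\cdot x-t$ does not change sign), and polynomials of degree $\le k$ restricted to the unit ball lie in $\mathcal{K}_1(\mathbb{P}_k^d)$ with norm bounded by a fixed multiple of their coefficients, so this contribution is harmless.

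\textbf{Matching the smoothness index via Fourier slice.} This is the computational heart of the proof. Applying Plancherel in $t$ to the $L^2$ norm above gives
\begin{equation*}
    \|P\mathcal{R}\tilde f\|_{L^2(S^{d-1}\times\mathbb{R})}^2 \eqsim \int_{S^{d-1}}\int_{\mathbb{R}}|\tau|^{2(d+k)}|\widehat{\mathcal{R}\tilde f}(\omega,\tau)|^2\, d\tau\, d\omega.
\end{equation*}
The Fourier slice theorem $\widehat{\mathcal{R}\tilde f}(\omega,\tau)=\hat{\tilde f}(\tau\omega)$, together with the polar-coordinate substitution $\xi=\tau\omega$ (which contributes a Jacobian factor of $|\xi|^{-(d-1)}$), converts this to
\begin{equation*}
    \int_{\mathbb{R}^d}|\xi|^{2(d+k)-(d-1)}|\hat{\tilde f}(\xi)|^2\,d\xi = \int_{\mathbb{R}^d}|\xi|^{d+2k+1}|\hat{\tilde f}(\xi)|^2\,d\xi = \int_{\mathbb{R}^d}|\xi|^{2s}|\hat{\tilde f}(\xi)|^2\,d\xi,
\end{equation*}
which is exactly the $W^s(L_2(\mathbb{R}^d))$-seminorm squared for $s=(d+2k+1)/2$, and is thus controlled by $\|f\|_{W^s(L_2(\Omega))}^2$ via the Sobolev extension.

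\textbf{Main obstacle.} The principal difficulty is correctly invoking the Radon-domain characterization of $\mathcal{K}_1(\mathbb{P}_k^d)$ with the right operator $P$ of total order $d+k$, and carefully handling the polynomial contributions arising from $|t|>1$ (which are forced by the restriction $b\in[-1,1]$ in the definition of $\mathbb{P}_k^d$). The elegant feature of the argument is that the exponent $d+k$ appearing in the Radon characterization matches, under the polar-coordinate Jacobian, exactly the Sobolev index $s=(d+2k+1)/2$, so that Cauchy--Schwarz loses no regularity and the embedding is tight at precisely this critical smoothness.
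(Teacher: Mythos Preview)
Your approach is essentially the paper's: Radon inversion, an order-$(d+k)$ operator in the offset variable to produce the density against $\sigma_k(\omega\cdot x-t)$, Cauchy--Schwarz on a bounded $t$-interval to pass from $L^1$ to $L^2$, and then the Fourier slice / polar-coordinate computation matching the exponent to $2s=d+2k+1$. The paper phrases the first step as the Peano kernel formula applied to $F_\omega:=H_d\mathcal{R}f(\omega,\cdot)$ on $[-1,1]$, but the content is the same.

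There is, however, a genuine gap in your treatment when $d$ is even. In that case $(-\partial_t^2)^{(d-1)/2}$ is \emph{not} a differential operator: its Fourier symbol $|\tau|^{d-1}$ is not a polynomial, and the resulting pseudo-differential operator involves a Hilbert transform that destroys compact support. Hence your claim that $P\mathcal{R}\tilde f$ is supported in $|t|\le R$ is false for even $d$ (the paper explicitly flags this). The Cauchy--Schwarz step survives --- just apply it on $[-1,1]$ directly --- but your handling of the polynomial remainder does not: the tail $|t|>R$ now contributes nontrivially, and you have given no bound on those polynomial coefficients in terms of $\|f\|_{W^s(L_2)}$. The paper sidesteps this entirely by writing $f=p+g$ on $\Omega$ via the Peano kernel on $[-1,1]$, bounding $\|g\|_{\mathcal{K}_1}$ and $\|g\|_{L_2(\Omega)}$ by the same integral, and then estimating $\|p\|_{L_2(\Omega)}\le\|f\|_{L_2(\Omega)}+\|g\|_{L_2(\Omega)}$; equivalence of norms on the finite-dimensional space of degree-$\le k$ polynomials finishes the job with no support assumption needed.
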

This result shows that the $L_2$-Sobolev space with a certain amount of smoothness embeds into the variation space $\mathcal{K}_1(\mathbb{P}_k^d)$ (note that here we need the fractional Sobolev spaces defined in \eqref{fraction-sobolev-index-2-definition} if $d$ is even), and has quite a few important consequences. First, combining this with the approximation rate \eqref{variation-space-approx-rate}, we obtain the following corollary.

\begin{corollary}\label{non-linear-approximation-corollary}
    Let $s = (d+2k+1)/2$. Then we have the approximation rate
    \begin{equation}\label{hilbert-space-nonlinear-rate}
        \inf_{f_n\in \Sigma_n^k(\mathbb{R}^d)} \|f - f_n\|_{L_\infty(\Omega)} \leq C\|f\|_{W^s(L_2(\Omega))}n^{-s/d}.
    \end{equation}
\end{corollary}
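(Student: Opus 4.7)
The plan is to derive the corollary as an essentially immediate consequence of chaining Theorem \ref{main-embedding-theorem} with the known variation-space approximation bound \eqref{variation-space-approx-rate}. First I would fix $f \in W^s(L_2(\Omega))$ with $s = (d+2k+1)/2$ and, without loss of generality, assume $\|f\|_{W^s(L_2(\Omega))} \leq 1$. Applying Theorem \ref{main-embedding-theorem}, I get $f \in \mathcal{K}_1(\mathbb{P}_k^d)$ together with a quantitative bound
\begin{equation*}
    \|f\|_{\mathcal{K}_1(\mathbb{P}_k^d)} \leq C\|f\|_{W^s(L_2(\Omega))},
\end{equation*}
where the constant $C$ depends only on $d$ and $k$. (Strictly speaking, Theorem \ref{main-embedding-theorem} is stated as an embedding, but since both sides are Banach spaces the closed graph theorem automatically produces such a quantitative bound; if needed I would verify this is built into its proof.)

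Next I would apply the approximation rate \eqref{variation-space-approx-rate} with $p = \infty$, which is exactly the case established in \cite{siegel2023optimal}. This yields $f_n \in \Sigma_n^k(\mathbb{R}^d)$ with
\begin{equation*}
    \|f - f_n\|_{L_\infty(\Omega)} \leq C\|f\|_{\mathcal{K}_1(\mathbb{P}_k^d)}\,n^{-\frac{1}{2} - \frac{2k+1}{2d}}.
\end{equation*}
The final step is a one-line arithmetic check: the exponent satisfies
\begin{equation*}
    \frac{1}{2} + \frac{2k+1}{2d} = \frac{d + 2k + 1}{2d} = \frac{s}{d},
\end{equation*}
so substituting the embedding bound gives exactly $\|f - f_n\|_{L_\infty(\Omega)} \leq C\|f\|_{W^s(L_2(\Omega))} n^{-s/d}$, which is the claimed inequality \eqref{hilbert-space-nonlinear-rate}.

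There is no real obstacle here: all the difficulty has been absorbed into Theorem \ref{main-embedding-theorem} and into the $L_\infty$ variation-space rate of \cite{siegel2023optimal}. The only cosmetic point worth flagging is the choice of norm on the inner weights in the dictionary $\mathbb{P}_k^d$: since I took $\omega \in S^{d-1}$ and $b \in [-1,1]$ and $\Omega$ is the unit ball, the dictionary elements are uniformly bounded, and the $L_\infty$ rate \eqref{variation-space-approx-rate} applies directly. If one wanted the corollary for a general bounded $\Omega$, the only additional ingredient needed would be a Sobolev extension, exactly as flagged in the introduction.
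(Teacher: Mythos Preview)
Your proposal is correct and matches the paper's own argument exactly: the corollary is stated immediately after Theorem \ref{main-embedding-theorem} with the remark that it follows by combining the embedding with the approximation rate \eqref{variation-space-approx-rate}, and your arithmetic check $\frac{1}{2}+\frac{2k+1}{2d}=\frac{s}{d}$ is the only thing left to verify.
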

Note that in \eqref{hilbert-space-nonlinear-rate} we have error measured in $L_p$ with $p = \infty$ and smoothness measured in $L_q$ with $q = 2$. In particular, this result gives to the best of our knowledge the first approximation rate for ridge splines in the non-linear regime when $q < p$. However, this only applies to one particular value of $s$ and $q \geq 2$, and it is an interesting open question whether this can be extended more generally (as indicated in Table \ref{approximation-rates-table}).

To understand the implications for the linear regime, we note that it follows from Corollary \ref{non-linear-approximation-corollary} that
\begin{equation}\label{approximation-rates-l-p-maximal-sobolev}
    \inf_{f_n\in \Sigma_n^k(\mathbb{R}^d)} \|f - f_n\|_{L_p(\Omega)} \leq C\|f\|_{W^s(L_p(\Omega))}n^{-s/d}
\end{equation}
for any $2\leq p\leq \infty$ with $s = (d+2k+1)/2$. Interpolation arguments can now be used to give approximation rates for Sobolev spaces in the regime when $p = q$ and $p \geq 2$ (see for instance, Chapter 6 in \cite{devore1993constructive} and \cite{devore1988interpolation,peetre1967theory,johnen2006equivalence,korneichuk1961best}).
\begin{corollary}\label{main-upper-bounds-corollary}
    Suppose that $2\leq p \leq \infty$ and $0 < s \leq k + \frac{d+1}{2}$. Then we have
    \begin{equation}
        \inf_{f_n\in \Sigma_n^k(\mathbb{R}^d)} \|f - f_n\|_{L_p(\Omega)} \leq C\|f\|_{W^s(L_p(\Omega))}n^{-s/d}.
    \end{equation}
\end{corollary}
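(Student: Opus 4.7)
The plan is to obtain Corollary~\ref{main-upper-bounds-corollary} by real interpolation (via Peetre's $K$-functional) between two endpoint bounds: the trivial approximation bound at smoothness $0$ and the endpoint bound \eqref{approximation-rates-l-p-maximal-sobolev} at the maximal smoothness $s_0 := k + (d+1)/2$. Let me denote the approximation error by $E_n(f) := \inf_{f_n \in \Sigma_n^k} \|f - f_n\|_{L_p(\Omega)}$.

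First, I would record the two endpoint estimates. Trivially $0 \in \Sigma_n^k$ gives $E_n(f) \leq \|f\|_{L_p(\Omega)}$, while \eqref{approximation-rates-l-p-maximal-sobolev} gives $E_n(f) \leq C n^{-s_0/d} \|f\|_{W^{s_0}(L_p(\Omega))}$ for all $2 \leq p \leq \infty$. The second key ingredient is the sub-additivity $\Sigma_n^k + \Sigma_n^k \subset \Sigma_{2n}^k$, which implies that for any splitting $f = g + h$,
\begin{equation*}
E_{2n}(f) \leq E_n(g) + E_n(h) \leq \|g\|_{L_p(\Omega)} + C n^{-s_0/d} \|h\|_{W^{s_0}(L_p(\Omega))}.
\end{equation*}
Taking the infimum over all decompositions $f = g + h$ yields
\begin{equation*}
E_{2n}(f) \leq K\!\left(C n^{-s_0/d}, f;\, L_p(\Omega), W^{s_0}(L_p(\Omega))\right),
\end{equation*}
where $K$ is the classical $K$-functional of the pair $(L_p(\Omega), W^{s_0}(L_p(\Omega)))$.

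Next, I would invoke the standard real interpolation bound for Sobolev spaces: for $0 < s \leq s_0$ and $1 \leq p \leq \infty$,
\begin{equation*}
K\!\left(t, f;\, L_p(\Omega), W^{s_0}(L_p(\Omega))\right) \leq C\, t^{s/s_0}\, \|f\|_{W^s(L_p(\Omega))},
\end{equation*}
which reflects the continuous embedding $W^s(L_p(\Omega)) \hookrightarrow (L_p(\Omega), W^{s_0}(L_p(\Omega)))_{s/s_0, \infty}$. This is precisely the sort of $K$-functional estimate developed for Sobolev/Besov scales in Chapter~6 of \cite{devore1993constructive} and the interpolation references \cite{devore1988interpolation,peetre1967theory,johnen2006equivalence}. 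Substituting $t = C n^{-s_0/d}$ and collecting constants gives
\begin{equation*}
E_{2n}(f) \leq C\, (n^{-s_0/d})^{s/s_0}\, \|f\|_{W^s(L_p(\Omega))} = C\, n^{-s/d}\, \|f\|_{W^s(L_p(\Omega))}.
\end{equation*}
Replacing $n$ by $\lceil n/2 \rceil$ (or, equivalently, absorbing the factor $2^{s/d}$ into $C$) yields the stated rate.

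The main obstacle, and essentially the only nontrivial input beyond \eqref{approximation-rates-l-p-maximal-sobolev}, is the $K$-functional upper bound in the third step. For non-integer $s$ this is immediate once one identifies $W^s(L_p(\Omega))$ with (or embeds it into) the appropriate Besov space $B^s_{p,\infty}(\Omega)$, while for integer $s$ strictly between $0$ and $s_0$ one can either use the classical Johnen-type equivalence \cite{johnen2006equivalence} or note that $W^s(L_p) \hookrightarrow W^{s-\epsilon}(L_p)$ and apply the non-integer case with a negligible loss. A small subsidiary issue is that $\Omega$ is only the unit ball, so the interpolation statement on $\Omega$ is inherited from the one on $\mathbb{R}^d$ via Sobolev extension, but the extension theorems cited in the introduction cover this without difficulty.
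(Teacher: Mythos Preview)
Your proposal is correct and is exactly the interpolation argument the paper itself identifies as the ``real'' proof; the paper then opts to give a self-contained version by explicitly constructing the splitting $f=(f-f_\epsilon)+f_\epsilon$ via mollification, bounding $\|f-f_\epsilon\|_{L_p}\leq C\epsilon^s\|f\|_{W^s(L_p)}$ and $\|f_\epsilon\|_{W^{s_0}(L_2)}\leq C\epsilon^{-(s_0-s)}\|f\|_{W^s(L_p)}$, and optimizing in $\epsilon$ --- which is precisely the $K$-functional estimate you quote, unpacked. One small difference worth noting: the paper applies Corollary~\ref{non-linear-approximation-corollary} directly (endpoint space $W^{s_0}(L_2)$) rather than \eqref{approximation-rates-l-p-maximal-sobolev} (endpoint space $W^{s_0}(L_p)$), which neatly sidesteps the issue that $W^{s_0}(L_p)$ for $p>2$ and non-integer $s_0$ is never defined in the paper; your route works too but tacitly relies on that definition when $d$ is even.
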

Corollary \ref{main-upper-bounds-corollary} extends the approximation rates obtained in \cite{bach2017breaking,yang2024optimal,mao2023rates,yang2023nonparametric,petrushev1998approximation} to all $p \geq 2$. To keep the paper as self-contained and simple as possible, we provide an elementary proof (for integral $s$) in Section \ref{interpolation-theory-section}.

Note that in Corollary \ref{main-upper-bounds-corollary}, we required the index $p \geq 2$. When $d = 1$, i.e., in the case of one-dimensional splines, it is well-known that the same rate also holds when $p < 2$. In this case, Theorem \ref{main-embedding-theorem} can actually be improved to (see \cite{siegel2023characterization}, Theorem 3) 
\begin{equation}\label{embedding-one-dimension}
        W^s(L_1(\Omega)) \subset \mathcal{K}_1(\mathbb{P}_k^d)
\end{equation}
for $s = k+1$ (this is the value of $s$ in Theorem \ref{main-embedding-theorem} when $d = 1$). Approximation rates for all $1\leq p \leq \infty$ easily follow from this using the arguments given in this paper. However, we remark that this method of proof fails when $d > 1$, since the embedding \eqref{embedding-one-dimension} fails in this case for $s = (d+2k+1)/2$, which is required to obtain the approximation rate in Corollary \ref{main-upper-bounds-corollary}. This can be seen by noting that
$$
    \mathcal{K}_1(\mathbb{P}_k^d) \subset L_\infty(\Omega),
$$
and thus if \eqref{embedding-one-dimension} holds, then we must have $W^s(L_1(\Omega)) \subset L_\infty(\Omega)$. But in order for this to hold, the Sobolev embedding theory implies that $s \geq d$, which is not compatible with $s = (d+2k+1)/2$ unless
$$
    (d+2k+1)/2 \geq d,
$$
i.e., $k \geq (d-1)/2$.
For this reason the current method of proof cannot give the same approximation rates when $d > 1$ for all values of $1\leq p < 2$ and $k \geq 0$. Resolving these cases is an interesting open problem, which will require methods that go beyond the variation spaces $\mathcal{K}_1(\mathbb{P}_k^d)$.

Let us also remark that the embedding given in Theorem \ref{main-embedding-theorem} is sharp in the sense of metric entropy. Recall that the metric entropy numbers of a compact set $K\subset X$ in a Banach space $X$ is defined by
\begin{equation}
 \epsilon_n(K)_X = \inf\{\epsilon > 0:~\text{$K$ is covered by $2^n$ balls of radius $\epsilon$}\}.
\end{equation}
This concept was first introduced by Kolmogorov \cite{kolmogorov1958linear} and gives a measure of the size of compact set $K\subset X$. Roughly speaking, it gives the smallest possible discretization error if the set $K$ is discretized using $n$-bits of information. It has been proved in \cite{siegel2022sharp} that the metric entropy of the unit ball $B_1(\mathbb{P}_k^d)$ satisfies
\begin{equation}
\epsilon_n(B_1(\mathbb{P}_k^d))_{L_2(\Omega)} \eqsim n^{-\frac{1}{2} - \frac{2k+1}{2d}}.
\end{equation}
Moreover, the results in \cite{ma2022uniform,siegel2023optimal} imply that the metric entropy decays at the same rate in all $L_p(\Omega)$-spaces for $1\leq p \leq \infty$ (potentially up to logarithmic factors). By the Birman-Solomyak theorem \cite{birman1967piecewise}, this matches the rate of decay of the metric entropy with respect to $L_p(\Omega)$ of the unit ball of the Sobolev space $W^s(L_2(\Omega))$ for $s = (d+2k+1)/2$. This means that both spaces in Theorem \ref{main-embedding-theorem} have roughly the same size in $L_p(\Omega)$.

Finally, let use relate these results to the existing literature on ridge approximation. Ridge approximation is concerned with approximating a target function $f$ by an element from the set
\begin{equation}
    \mathcal{R}_n := \left\{\sum_{i=1}^n f_i(\omega_i\cdot x),~f_i:\mathbb{R}\rightarrow \mathbb{R},~\omega_i\in S^{d-1}\right\},
\end{equation}
Here the functions $f_i$ can be arbitrary one-dimensional functions and the direction $\omega_i$ lie on the sphere $S^{d-1}$. There is a fairly extensive literature on the problem of ridge approximation (see for instance \cite{konyagin2018some,pinkus1999approximation} for an overview of the literature). In the linear regime optimal approximation rates are known for Sobolev and Besov spaces (see \cite{maiorov1999best,maiorov2010best}) and we have for instance
\begin{equation}
    \inf_{f_n\in \mathcal{R}_n} \|f - f_n\|_{L_p(\Omega)} \leq C\|f\|_{W^s(L_p(\Omega))}n^{-\frac{s}{d-1}}
\end{equation}
for all $1\leq p \leq \infty$. This result is proved by first approximating $f$ by a (multivariate) polynomial of degree $m$, and then representing this polynomial as a superposition of $m^{d-1}$ polynomial ridge functions. This construction applies to neural networks provided we use an exotic activation function $\sigma$ whose translates are dense in $C([-1,1])$ (see \cite{maiorov1999lower}). Using an arbitrary smooth non-polynomial activation function we can also reproduce polynomials using finite differences to obtain an approximation rate of $O(n^{-s/d})$ (see \cite{mhaskar1996neural}).

On the other hand, shallow ReLU$^k$ neural networks always represent piecewise polynomials of fixed degree $k$, and our results do not proceed by approximating with a high-degree polynomial. One would expect that such a method could only capture smoothness up to order $k+1$. Interestingly, as shown in Corollary \ref{main-upper-bounds-corollary}, the non-linear nature of ReLU$^k$ neural networks allow us to capture smoothness up to degree $k + (d+1)/2$. This shows that in high dimensions, suitably adaptive piecewise polynomials can capture very high smoothness with a fixed low degree, providing a Sobolev space analogue of the results obtained in \cite{siegel2022high}. We remark that this is a potential advantage of shallow ReLU$^k$ networks for applications such as solving PDEs \cite{xu2020finite,siegel2023greedy}.

The paper is organized as follows. In Section \ref{radon-transform-section} we give an overview of the relevant facts regarding the Radon transform \cite{radon20051} that we will use later. Then, in Section \ref{main-proofs-section} we provide the proof of Theorem \ref{main-embedding-theorem}. In Section \ref{interpolation-theory-section} we deduce Corollary \ref{main-upper-bounds-corollary}. Finally, in Section \ref{conclusion-section} we give some concluding remarks.
\section{The Radon Transform}\label{radon-transform-section}
In this section, we recall the definition and several important facts about the Radon transform that we will use later. The study of the Radon transform is a large and active area of research and we necessarily only cover a few basic facts which will be important in our later analysis. For more detailed information on the Radon transform, see for instance \cite{helgason1980radon,unser2023ridges,kuchment2013radon}. We also remark that the Radon transform has recently been extensively applied to the study of shallow neural networks in \cite{parhi2021banach,ongie2019function}.

Given a Schwartz function $f\in \mathcal{S}(\mathbb{R}^d)$ defined on $\mathbb{R}^d$, we define the Radon transform of $f$ as
\begin{equation}
    \mathcal{R}(f)(\omega,b) = \int_{\omega\cdot x = b} f(x)dx,
\end{equation}
where the above integral is over the hyerplane $\omega\cdot x = b$. The domain of the Radon transform is $S^{d-1}\times \mathbb{R}$, i.e. $|\omega| = 1$ and $b\in \mathbb{R}$. A standard calculation using Fubini's theorem shows that
\begin{equation}\label{fubini-l1-bound}
    \|\mathcal{R}(f)(\omega,\cdot)\|_{L_1(\mathbb{R})} \leq  \|f\|_{L_1(\mathbb{R}^d)}.
\end{equation}
Integrating this over the sphere $S^{d-1}$ we get 
$$\|\mathcal{R}(f)\|_{L_1(S^{d-1}\times \mathbb{R})} \leq \omega_{d-1}\|f\|_{L_1(\mathbb{R}^d)},$$ where $\omega_{d-1}$ denotes the surface area of the sphere $S^{d-1}$. This implies that the Radon transform extends to a bounded map from $L_1(\mathbb{R}^d)\rightarrow L_1(S^{d-1}\times \mathbb{R})$. In fact, the Radon transform can be extended the more general classes of distributions (see for instance \cite{hertle1983continuity,ludwig1966radon,ramm1995radon,ramm2020radon,parhi2024distributional}).

A fundamental result relating the Radon transform to the Fourier transform is the Fourier slice theorem (see for instance Theorem 5.10 in \cite{kuchment2013radon}).
\begin{theorem}[Fourier Slice Theorem]\label{fourier-slice-theorem}
    Let $f\in L_1(\mathbb{R}^d)$ and $\omega\in S^{d-1}$. Let $g_\omega(b) = \mathcal{R}(f)(\omega,b)$. Then for each $t\in \mathbb{R}$ we have
    \begin{equation}
        \widehat{g_\omega}(t) = \hat{f}(\omega t).
    \end{equation}
\end{theorem}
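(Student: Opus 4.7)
The plan is to prove the Fourier Slice Theorem by a direct computation based on Fubini's theorem (or equivalently, the co-area foliation of $\mathbb{R}^d$ by hyperplanes perpendicular to $\omega$). The idea is that the one-dimensional Fourier transform of $g_\omega$ is obtained by integrating $e^{itb}$ against the mass $f$ distributes on the hyperplane $\omega\cdot x = b$, and then integrating over $b$; since $e^{itb}$ is constant along each such hyperplane, this collapses to a full integral of $e^{it(\omega\cdot x)} f(x)$ over $\mathbb{R}^d$, which is exactly $\hat{f}(t\omega)$.

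Concretely, I would first reduce to the case $\omega = e_d$ by an orthogonal change of variables, since the Lebesgue measure on $\mathbb{R}^d$, the Radon transform, and the Fourier transform all transform covariantly under rotations. After this reduction the Radon transform becomes
\begin{equation*}
    g_{e_d}(b) = \int_{\mathbb{R}^{d-1}} f(x', b)\, dx',
\end{equation*}
and the claim reduces to the elementary identity
\begin{equation*}
    \int_{\mathbb{R}} e^{itb} \int_{\mathbb{R}^{d-1}} f(x',b)\, dx'\, db = \int_{\mathbb{R}^d} e^{it x_d} f(x)\, dx = \hat{f}(t e_d),
\end{equation*}
which is just Fubini's theorem applied to the integrable function $e^{itb} f(x',b)$. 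The $L_1$ hypothesis on $f$, combined with the bound \eqref{fubini-l1-bound}, guarantees that the iterated integral is absolutely convergent and that all quantities in sight are well-defined.

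The only mild subtlety to address is the precise notion of integration over the hyperplane $\{\omega\cdot x = b\}$: one has to make sure that the surface measure used in the definition of $\mathcal{R}(f)(\omega,b)$ is the $(d-1)$-dimensional Lebesgue measure induced on the hyperplane, so that the disintegration of Lebesgue measure on $\mathbb{R}^d$ really factors as $dx = dx_\perp\, db$ with no Jacobian (this is precisely why $|\omega|=1$ matters). Once this bookkeeping is settled, no further work is needed; the theorem is essentially just Fubini in disguise, and there is no genuine obstacle in the proof beyond carefully justifying the change of variables and the order of integration.
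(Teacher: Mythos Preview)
Your proposal is correct and is the standard Fubini-based proof of the Fourier Slice Theorem. Note, however, that the paper does not actually supply its own proof of this statement: it simply cites Theorem~5.10 in \cite{kuchment2013radon} and takes the result as given. So there is nothing to compare against beyond observing that your argument is exactly the textbook one the reference would give --- rotate so that $\omega = e_d$, write the Radon transform as integration over the first $d-1$ coordinates, and apply Fubini to the absolutely integrable function $e^{itb}f(x',b)$.
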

Note that by \eqref{fubini-l1-bound} we have $g_\omega\in L_1(\mathbb{R})$ and so the Fourier transform in Theorem \ref{fourier-slice-theorem} is well-defined. 

Utilizing the Fourier slice theorem and Fourier inversion, we can invert the Radon transform as follows (see for instance Section 5.7 in \cite{kuchment2013radon}):
\begin{equation}\label{inversion-radon-equation}
\begin{split}
    f(x) = \frac{1}{(2\pi)^d}\int_{\mathbb{R}^d} \hat{f}(\xi) e^{i\xi\cdot x}d\xi &= \frac{1}{2(2\pi)^d}\int_{S^{d-1}}\int_{-\infty}^\infty\hat{f}(\omega t)|t|^{d-1}e^{it\omega\cdot x}dtd\omega\\
    & = \frac{1}{2(2\pi)^d}\int_{S^{d-1}}\int_{-\infty}^\infty\widehat{g_\omega}(t)|t|^{d-1}e^{it\omega\cdot x}dtd\omega.
\end{split}
\end{equation}
The inner integral above is the inverse Fourier transform of $\widehat{g_\omega}(t)|t|^{d-1}$ evaluated at $\omega\cdot x$. This gives the inversion formula
\begin{equation}\label{radon-inversion}
    f(x) = \int_{S^{d-1}} H_d\mathcal{R}f(\omega,\omega\cdot x)d\omega,
\end{equation}
where the operator $H_d$ acts on the $b$-coordinate and is defined by the (one-dimensional) Fourier multiplier
\begin{equation}\label{back-projection-operator}
    \widehat{H_dg}(t) = \frac{1}{2(2\pi)^d}|t|^{d-1}\hat{g}(t).
\end{equation}
The inversion formula \eqref{radon-inversion} is typically called the filtered back-projection operator and is often applied to invert the Radon transform in medical imaging applications (see for instance \cite{kuchment2013radon,herman2009fundamentals,natterer2001mathematics}). We will not address the general validity of the inversion formula \eqref{radon-inversion}, but for our purposes it suffices to observe that \eqref{radon-inversion} is valid whenever all of the integrals in \eqref{inversion-radon-equation} converge absolutely, for instance, whenever $f$ is a Schwartz function.

\section{Embeddings of Sobolev Spaces into ReLU$^k$ Variation Spaces}\label{main-proofs-section}
Our goal in this section is to prove Theorem \ref{main-embedding-theorem} on the embedding of Sobolev spaces into the neural network variation space.
\begin{proof}[Proof of Theorem \ref{main-embedding-theorem}]
    We first claim that it suffices to show that
    \begin{equation}\label{fundamental-embedding-bound-proof-equation}
        \|f\|_{\mathcal{K}_1(\mathbb{P}_k^d)} \leq C\|f\|_{W^s(L_2(\mathbb{R}^d))}
    \end{equation}
    for $s = (d + 2k + 1)/2$ and every function $f\in C^\infty_c(\mathbb{B}_2^d)$. Here 
    \begin{equation}
        \mathbb{B}_2^d := \{x\in \mathbb{R}^d:~|x|\leq 2\}
    \end{equation}
    denotes the ball of radius $2$ in $\mathbb{R}^d$ (any bounded domain containing $\overline{\Omega}$ would also do),
    the norm on the left-hand side is the variation norm of $f$ restricted to $\Omega$, and the constant $C$ is independent of $f$.

    Given an arbitrary $f\in W^s(L_2(\Omega))$, by the definition \eqref{fraction-sobolev-index-2-definition} there is an $f_e\in W^s(L_2(\mathbb{R}^d))$ such that $f(x) = f_e(x)$ for $x\in \Omega$, and 
    \begin{equation}
        \|f_e\|_{W^s(L_2(\mathbb{R}^d))} \leq 2\|f\|_{W^s(L_2(\Omega))}.
    \end{equation}
    Next, by a standard density argument, we let $f_1,f_2,...f_n,...\in C^\infty_c(\mathbb{R}^d)$ be a sequence of compactly supported smooth functions converging to $f_e$ in the $W^s(L_2(\mathbb{R}^d))$-norm. Of course, we may assume without loss of generality (by removing some terms if necessary) that $$\|f_i\|_{W^s(L_2(\mathbb{R}^d))} \leq 2\|f_e\|_{W^s(L_2(\mathbb{R}^d))}.$$

    Fix a smooth cut-off function $\phi\in C^\infty_c(\mathbb{B}_2^d)$ such that $\phi(x) = 1$ for $x\in \Omega$. We make the elementary observation that given any $h\in W^s(L_2(\mathbb{R}))$ we have the following bound on the product $\phi h$:
    \begin{equation}
    \begin{split}
        \|\phi h\|^2_{W^s(L_2(\mathbb{R}))} &= \int_{\mathbb{R}^d}(1 + |\xi|)^{2s}|(\hat\phi * \hat{h})(\xi)|^2 d\xi = \int_{\mathbb{R}^d}(1 + |\xi|)^{2s}\left|\int_{\mathbb{R}^d}\hat\phi(\xi - \nu) \hat{h}(\nu)d\nu\right|^2 d\xi\\
        &\leq \|\hat{\phi}\|_{L_1(\mathbb{R}^d)}\int_{\mathbb{R}^d}(1 + |\xi|)^{2s}\int_{\mathbb{R}^d}|\hat\phi(\xi - \nu)| |\hat{h}(\nu)|^2d\nu d\xi\\
        &\leq \|\hat{\phi}\|_{L_1(\mathbb{R}^d)}\int_{\mathbb{R}^d}\int_{\mathbb{R}^d}(1 + |\xi-\nu|)^{2s}|\hat\phi(\xi - \nu)| (1 + |\nu|)^{2s}|\hat{h}(\nu)|^2d\nu d\xi\\
        & = \|\hat{\phi}\|_{L_1(\mathbb{R}^d)}\left(\int_{\mathbb{R}^d} (1 + |\xi|)^{2s} |\hat{\phi}(\xi)|d\xi\right)\|h\|^2_{W^s(L_2(\mathbb{R}^d))} \leq C\|h\|^2_{W^s(L_2(\mathbb{R}^d))},
    \end{split}
    \end{equation}
    where the constant only depends upon $\phi$ (which is fixed).
    Here the first inequality is Jensen's inequality and the second comes from the elementary fact that $$(1 + |\xi|) \leq (1 + |\xi - \nu| + |\nu|) \leq (1 + |\xi - \nu|)(1 + |\nu|).$$
    
    Thus, the sequence $\phi f_1,\phi f_2,...\in C^\infty_c(\mathbb{B}_2^d)$ converges to $\phi f_e$ in $W^s(L_2(\mathbb{R}^d))$ and it follows that
    \begin{equation}
        \|\phi f_e\|_{W^s(L_2(\mathbb{R}^d))} \leq \lim\inf_i\|\phi f_i\|_{W^s(L_2(\mathbb{R}^d))} \leq C\lim\inf_i\| f_i\|_{W^s(L_2(\mathbb{R}^d))} \leq C\|f\|_{W^s(L_2(\Omega))}.
    \end{equation}
    
    The bound \eqref{fundamental-embedding-bound-proof-equation} applied to the differences $\phi f_n - \phi f_m$ means that it is a also a Cauchy sequence in $\mathcal{K}_1(\mathbb{P}_k^d)$ (when restricted to $\Omega$). Since $\mathcal{K}_1(\mathbb{P}_k^d)$ is a Banach space (see Lemma 1 in \cite{siegel2023characterization}), it follows that this sequence converges in $\mathcal{K}_1(\mathbb{P}_k^d)$ as well, and that the limit function, let us call it $\tilde{f}$, satisfies the bound (again using \eqref{fundamental-embedding-bound-proof-equation})
    \begin{equation}
        \|\tilde{f}\|_{\mathcal{K}_1(\mathbb{P}_k^d)} \leq \lim\inf_i \|\phi f_i\|_{\mathcal{K}_1(\mathbb{P}_k^d)} \leq C\lim\inf_i \|\phi f_i\|_{W^s(L_2(\mathbb{R}^d))} \leq C\|f\|_{W^s(L_2(\Omega))}.
    \end{equation}

    Finally, we observe that convergence in $W^s(L_2(\mathbb{R}^d))$ and in $\mathcal{K}_1(\mathbb{P}_k^d)$ both imply convergence in $L_2(\Omega)$, from which it follows that $\tilde{f} = \phi f_e = f$ in $L_2(\Omega)$, and thus almost everywhere in $\Omega$. Hence, the bound
    \begin{equation}
        \|f\|_{\mathcal{K}_1(\mathbb{P}_k^d)} \leq C\|f\|_{W^s(L_2(\Omega))}
    \end{equation}
    is satisfied for all $f\in W^s(L_2(\Omega))$, as desired.

    Next, let us turn to proving \eqref{fundamental-embedding-bound-proof-equation}. Since $f$ is a Schwartz function, we may use the Radon inversion formula \eqref{radon-inversion} to write
    \begin{equation}\label{radon-inversion-f-361}
            f(x) = \int_{S^{d-1}} F_\omega(\omega\cdot x)d\omega,
    \end{equation}
    where $F_\omega(t) = H_d\mathcal{R}f(\omega,t)$. We remark also that since $f\in C^\infty_c(\mathbb{B}_2^d)$, we have $F_\omega\in C^{\infty}(\mathbb{R})$ for each $\omega\in S^{d-1}$ (it is not necessarily compactly supported due to the Hilbert transform in the filtered back-projection operator).

    Next, we use the Peano kernel formula to rewrite \eqref{radon-inversion-f-361} for $x$ in the unit ball as
    \begin{equation}
    \begin{split}
        f(x) &= p(x) + \frac{1}{k!}\int_{S^{d-1}} \int_{-1}^{\omega\cdot x} F^{(k+1)}_\omega(b)(\omega\cdot x - b)^kdbd\omega\\
        &= p(x) + \frac{1}{k!}\int_{S^{d-1}} \int_{-1}^{1} F_\omega^{(k+1)}(b)\sigma_k(\omega\cdot x - b)dbd\omega,
    \end{split}
    \end{equation}
    where $p(x)$ is a polynomial of degree at most $k$ given by
    \begin{equation}
        p(x) = \int_{S^{d-1}}\sum_{j=0}^k\frac{F^{(j)}_{\omega}(-1)}{j!}(\omega\cdot x + 1)^jd\omega.
    \end{equation}
    Now H\"older's inequality implies that
    \begin{equation}
    \begin{split}
        \int_{S^{d-1}} \int_{-1}^1 |F^{(k+1)}_\omega(b)|dbd\omega \leq C\int_{S^{d-1}} \left(\int_{-1}^1 |F^{(k+1)}_\omega(b)|^2db\right)^{1/2}d\omega
        &\leq C\int_{S^{d-1}} \left(\int_{\mathbb{R}} |F^{(k+1)}_\omega(b)|^2db\right)^{1/2}d\omega\\
        &=C\int_{S^{d-1}} \left(\int_{\mathbb{R}} |t^{k+1}\hat{F}_\omega(t)|^2dt\right)^{1/2}d\omega.
        \end{split}
    \end{equation}
    Utilizing the Fourier slice theorem, the definition of the filtered back-projection operator $H_d$, and Jensen's inequality, we obtain the bound
    \begin{equation}\label{main-proof-l2-bound}
    \begin{split}
        \int_{S^{d-1}} \int_{-1}^1 |F^{(k+1)}_\omega(b)|dbd\omega &\leq
        C\int_{S^{d-1}} \left(\int_{\mathbb{R}} |t^{k+1}\hat{F}_\omega(t)|^2dt\right)^{1/2}d\omega\\
        & = C\int_{S^{d-1}} \left(\int_{-\infty}^{\infty} |t|^{2s+d-1}|\widehat{\mathcal{R}(f)}(\omega,t)|^2dt\right)^{1/2} d\omega\\ &\leq C\left(\int_{S^{d-1}} \int_{-\infty}^{\infty} |t|^{2s+d-1}|\widehat{\mathcal{R}(f)}(\omega,t)|^2dt d\omega\right)^{1/2}\\
        &= C\left(2\int_{\mathbb{R}^d} |\xi|^{2s}|\hat{f}(\xi)|^2d\xi\right)^{1/2} = C|f|_{W^s(L_2(\mathbb{R}^d))}.
    \end{split}
    \end{equation}
    Setting
    \begin{equation}
        g(x) := \frac{1}{k!}\int_{S^{d-1}} \int_{-1}^{1} F_\omega^{(k+1)}(b)\sigma_k(\omega\cdot x - b)dbd\omega
    \end{equation}
    the bound \eqref{main-proof-l2-bound} implies that (see for instance Lemma 3 in \cite{siegel2023characterization})
    \begin{equation}\label{g-k-1-d-bound-403}
        \|g\|_{\mathcal{K}_1(\mathbb{P}_k^d)} \leq \int_{S^{d-1}} \int_{-1}^1 |F^{(k+1)}_\omega(b)|dbd\omega \leq C|f|_{W^s(L_2(\mathbb{R}^d))}.
    \end{equation}
    It also immediately follows from \eqref{main-proof-l2-bound} that
    \begin{equation}
        \|g\|_{L_2(\Omega)} \leq C\int_{S^{d-1}} \int_{-1}^1 |F^{(k+1)}_\omega(b)|dbd\omega \leq C|f|_{W^s(L_2(\mathbb{R}^d))},
    \end{equation}
    since the elements of the dictionary $\mathbb{P}_k^d$ are uniformly bounded in $L_2$. This implies that
    \begin{equation}
        \|p\|_{L_2(\Omega)} = \|f - g\|_{L_2(\Omega)} \leq \|f\|_{L_2(\Omega)} + \|g\|_{L_2(\Omega)} \leq C\|f\|_{W^s(L_2(\mathbb{R}^d))}.
    \end{equation}
    Since all norms on the finite dimensional space of polynomials of degree at most $k$ are equivalent, we thus obtain
    \begin{equation}
        \|p\|_{\mathcal{K}_1(\mathbb{P}_k^d)} \leq C\|f\|_{W^s(L_2(\mathbb{R}^d))},
    \end{equation}
    which combined with \eqref{g-k-1-d-bound-403} gives $\|f\|_{\mathcal{K}_1(\mathbb{P}_k^d)} \leq C\|f\|_{W^s(L_2(\mathbb{R}^d))}$ as desired.
    \end{proof}

\section{Approximation Upper Bounds for Sobolev Spaces}\label{interpolation-theory-section}
    In this section, we deduce the approximation rates in Corollary \ref{main-upper-bounds-corollary} from Theorem \ref{main-embedding-theorem} and Corollary \ref{non-linear-approximation-corollary}. This result follows from the interpolation theory characterizing the interpolation spaces between the Sobolev space $W^s(L_p(\Omega))$ and $L_p(\Omega)$ (see for instance \cite{devore1993constructive}, Chapter 6 and \cite{johnen2006equivalence,korneichuk1961best} for the one dimensional case and \cite{devore1988interpolation,peetre1967theory} for the general case). For the reader's convenience and to keep the present paper self-contained, we give an elementary direct proof (which contains the essential interpolation argument).
    \begin{proof}[Proof of Corollary \ref{main-upper-bounds-corollary}]
    The first step in the proof is to note that by the standard Sobolev extension theorems (see for instance \cite{evans2010partial,devore1993constructive,stein1970singular,maz2013sobolev,adams2003sobolev}) we may assume that $f$ is defined on all of $\mathbb{R}^d$, $f$ is supported on the ball $\mathbb{B}_2^d$ of radius $2$ (or some other domain containing $\overline{\Omega}$), and 
    \begin{equation}
\|f\|_{W^s(L_p(\mathbb{R}^d))} \leq C\|f\|_{W^s(L_p(\Omega))}
    \end{equation} 
    for a constant $C = C(\Omega)$.

    Let $\phi:\mathbb{R}^d\rightarrow [0,\infty)$ be a smooth radially symmetric bump function supported in the unit ball and satisfying
    $$
        \int_{\mathbb{R}^d} \phi(x)dx = 1.
    $$
    For $\epsilon > 0$, we define $\phi_\epsilon:\mathbb{R}^d\rightarrow \mathbb{R}^d$ by 
    $$\phi_\epsilon(x) = \epsilon^{-d}\phi(x/\epsilon).$$
    Observe that $\lim_{\delta\rightarrow 0} \|\phi_\delta * f - f\|_{L_p} = 0$, and by the triangle inequality and the normalization of $\phi$ we also have $$\|\phi_\delta * f\|_{W^s(L_p(\mathbb{R}^d))} \leq \|f\|_{W^s(L_p(\mathbb{R}^d))}$$
    for any $\delta > 0$. Hence we may assume without loss of generality that $f\in C_c^\infty(\mathbb{R}^d)$.
    
    Now, we fix an $\epsilon > 0$ to be chosen later, and form the approximant
    \begin{equation}
        f_{\epsilon}(x) = \sum_{t=1}^s \binom{s}{t}(-1)^{t-1} \int_{\mathbb{R}^d} \phi_\epsilon(y)f(x - ty)dy.
    \end{equation}
    Using that $\int \phi_\epsilon(y)dy = 1$, we estimate the error $\|f - f_{\epsilon}\|_{L_p}$ by
    \begin{equation}
        \|f - f_{\epsilon}\|_{L_p(\mathbb{R}^d)} \leq \left\|\int_{\mathbb{R}^d} \phi_\epsilon(y) \left(\sum_{t=0}^s \binom{s}{t}(-1)^{t}f(x - ty)\right)dy\right\|_{L_p(\mathbb{R}^d;dx)}.
    \end{equation}
    Next, fix a $y\in \mathbb{R}^d$ and consider estimating
    \begin{equation}
        \left\|\left(\sum_{t=0}^s \binom{s}{t}(-1)^{t}f(x - ty)\right)\right\|_{L_p(\mathbb{R}^d,dx)} = \|\Delta_y^s f\|_{L_p(\mathbb{R}^d)},
    \end{equation}
    where we have written $\Delta_yf(x) = f(x) - f(x-y)$. Iteratively applying the fundamental theorem of calculus:
    \begin{equation}
        \Delta_yf(x) = -\int_{0}^1 \nabla f(x - ty)\cdot ydt,
    \end{equation}
    and using that the operator $\Delta_y$ commutes with the integrals in $t$, we obtain the formula
    \begin{equation}
        \Delta_y^s f(x) = (-1)^s\int_{[0,1]^s} D^sf(x - (\textbf{1}^Tt)y)\cdot y^{\otimes s}dt.
    \end{equation}
    Here $\textbf{1}^Tt = t_1 + \cdots + t_s$ and $D^s f\cdot y^{\otimes s}$ denotes the contraction of the $s$-th derivative of $f$ with the $s$-th tensor product of $y$ (this is the same as the $s$-th derivative of $f$ in the direction $y$). This implies the bound
    \begin{equation}
        |\Delta_y^s f(x)| \leq C|y|^s\int_{[0,1]^s} |D^sf(x - (\textbf{1}^Tt)y)|dt,
    \end{equation}
    where $C = C(s,d)$. If $p = \infty$, this already implies that $\|\Delta_y^s f\|_{L_\infty(\mathbb{R}^d)} \leq |y|^s\|f\|_{W^s(L_\infty(\mathbb{R}^d))}$. When $p < \infty$, we use Jensen's inequality to bound
    \begin{equation}
        |\Delta_y^s f(x)|^p \leq C^p|y|^{sp}\int_{[0,1]^s} |D^sf(x - (\textbf{1}^Tt)y)|^pdt,
    \end{equation}
    and integrate in $x$ to obtain
    \begin{equation}
    \begin{split}
        \|\Delta_y^s f\|^p_{L_p(\mathbb{R}^d)} &\leq C^p|y|^{sp}\int_{\mathbb{R}^d}\int_{[0,1]^s} |D^sf(x - (\textbf{1}^Tt)y)|^pdtdx\\
        &= C^p|y|^{sp}\int_{[0,1]^s} \int_{\mathbb{R}^d}|D^sf(x - (\textbf{1}^Tt)y)|^pdxdt = C^p|y|^{sp}\|f\|^p_{W^s(L_p(\mathbb{R}^d))}.
    \end{split}
    \end{equation}
    Hence we obtain the bound
    \begin{equation}
        \|\Delta_y^s f\|_{L_p(\mathbb{R}^d)} \leq C|y|^s\|f\|_{W^s(L_p(\mathbb{R}^d))}.
    \end{equation}
    
    Now, $\phi_\epsilon$ is supported on a ball of radius $\epsilon$, and thus the triangle inequality implies that
    \begin{equation}\label{f-f-eps-bound}
        \|f - f_{\epsilon}\|_{L_p(\mathbb{R}^d)} \leq \int_{\mathbb{R}^d} \phi_\epsilon(y) \|\Delta_y^s f\|_{L_p(\mathbb{R}^d)}dy \leq C\epsilon^s \|f\|_{W^s(L_p(\mathbb{R}^d))},
    \end{equation}
    since $\|\phi_\epsilon\|_{L^1(\mathbb{R}^d)} = 1$.

    The next step is to bound the $W^\alpha(L_2(\mathbb{R}^d))$-norm of $f_\epsilon$, where $\alpha = (d + 2k+1)/2$. Observe that since $s$ is fixed, it suffices to bound
    \begin{equation}
    \left\|\int_{\mathbb{R}^d} \phi_\epsilon(y) f(x - ty)dy\right\|_{W^\alpha(L_2(\mathbb{R}^d,dx))}
    \end{equation}
    for each fixed integer $t \geq 1$. To do this, we first make a change of variables to rewrite
    \begin{equation}
        f_{\epsilon,t}(x) := \int_{\mathbb{R}^d} \phi_\epsilon(y) f(x - ty)dy = \frac{1}{t^d}\int_{\mathbb{R}^d} \phi_\epsilon\left(\frac{y}{t}\right) f(x - y)dy = \int_{\mathbb{R}^d} \phi_{t\epsilon}(y) f(x - y)dy.
    \end{equation}
    Taking the Fourier transform, we thus obtain
    \begin{equation}
        \hat{f}_{\epsilon,t}(\xi) = \hat{f}(\xi)\hat{\phi}(t\epsilon \xi).
    \end{equation}
    We now estimate the $W^\alpha(L_2(\mathbb{R}^d))$-norm of $f_{\epsilon,t}$ as follows
    \begin{equation}
        |f_{\epsilon,t}|^2_{W^\alpha(L_2(\mathbb{R}^d))} \eqsim \int_{\mathbb{R}^d}|\xi|^{2\alpha}|\hat{f}_{\epsilon,t}(\xi)|^2d\xi = \int_{\mathbb{R}^d}|\xi|^{2\alpha}|\hat{f}(\xi)|^2|\hat{\phi}(t\epsilon \xi)|^2d\xi.
    \end{equation}
    Now, from the definition of the $W^s(L_2)$-norm, we have (recall that $p \geq 2$)
    \begin{equation}
        \int_{\mathbb{R}^d} |\xi|^{2s}|\hat{f}(\xi)|^2d\xi \eqsim |f|^2_{W^s(L_2(\mathbb{R}^d))} \leq C\|f\|^2_{W^s(L_p(\mathbb{R}^d))}.
    \end{equation}
    Thus, H\"older's inequality implies that
    \begin{equation}
    \begin{split}
     |f_{\epsilon,t}|^2_{W^\alpha(L_2(\mathbb{R}^d))} &\leq \left(\int_{\mathbb{R}^d} |\xi|^{2s}|\hat{f}(\xi)|^2d\xi\right)\left(\sup_{\xi\in \mathbb{R}^d} |\xi|^{2(\alpha - s)}|\hat{\phi}(t\epsilon \xi)|\right)\\
     &\leq C\|f\|^2_{W^s(L_p(\mathbb{R}^d))}\left(\sup_{\xi\in \mathbb{R}^d} |\xi|^{2(\alpha - s)}|\hat{\phi}(t\epsilon \xi)|\right).
     \end{split}
    \end{equation}
    By changing variables, we see that
    \begin{equation}
        \left(\sup_{\xi\in \mathbb{R}^d} |\xi|^{2(\alpha - s)}|\hat{\phi}(t\epsilon \xi)|\right) = (t\epsilon)^{-2(\alpha - s)}\left(\sup_{\xi\in \mathbb{R}^d} |\xi|^{2(\alpha - s)}|\hat{\phi}(\xi)|\right) \leq C\epsilon^{-2(\alpha - s)},
    \end{equation}
    since the supremum above is finite ($\phi$ is a Schwartz function).
    Hence, we get
    \begin{equation}
        |f_{\epsilon,t}|_{W^\alpha(L_2(\mathbb{R}^d))} \leq C\|f\|_{W^s(L_p(\mathbb{R}^d))}\epsilon^{-(\alpha - s)}.
    \end{equation}
    In addition, we clearly have from the triangle inequality that 
    \begin{equation}
        \|f_{\epsilon,t}\|_{L_2(\mathbb{R}^d)} \leq\|f\|_{L_2(\mathbb{R}^d)} \leq \|f\|_{W^s(L_2(\mathbb{R}^d))},
    \end{equation}
    so that if $\epsilon \leq 1$ we obtain (applying this for all $t$ up to $\rho$)
    \begin{equation}
    \|f_{\epsilon}\|_{W^\alpha(L_2(\mathbb{R}^d))} \leq C\|f\|_{W^s(L_p(\mathbb{R}^d))}\epsilon^{-(\alpha - s)}
    \end{equation}
We now apply Corollary \ref{non-linear-approximation-corollary} to obtain an $f_n\in \Sigma_n^k(\mathbb{R}^d)$ such that
\begin{equation}
    \|f_n - f_\epsilon\|_{L_p(\Omega)} \leq C\|f\|_{W^s(L_p(\mathbb{R}^d))}\epsilon^{-(\alpha - s)}n^{-\alpha}.
\end{equation}
Combining this with the bound \eqref{f-f-eps-bound}, we get
\begin{equation}
    \|f - f_n\|_{L_p(\Omega)} \leq C\|f\|_{W^s(L_p(\mathbb{R}^d))}\left(\epsilon^s + n^{-\alpha}\epsilon^{-(\alpha - s)}\right).
\end{equation}
Finally, choosing $\epsilon = n^{-1/d}$ and recalling that $\alpha = (d + 2k+1)/2$ completes the proof.
\end{proof}

\section{Conclusion}\label{conclusion-section}
In this work, we have determined optimal rates of approximation (up to logarithmic factors) for shallow ReLU$^k$ neural networks on Sobolev spaces in the regime where $p \leq q$, $2 \leq q\leq \infty$, and $s \leq (d+2k+1)/2$ (recall \eqref{general-approximation-rate-problem} for the general problem formulation). In the non-linear regime where $p > q$, we have also resolved this problem in the case where $q \geq 2$ and $s = (d+2k+1)/2$. A particularly interesting aspect of this analysis is that shallow ReLU$^k$ networks achieve the rate $n^{-s/d}$ for all $s$ up to $(d+2k+1)/2$, despite representing piecewise polynomials of degree $k$. However, numerous open problems remain, including determination of optimal rates when $s > (d+2k+1)/2$, $1\leq q < 2$, or in the non-linear regime when $p > q$ and $s < (d+2k+1)/2$ (see Table \ref{approximation-rates-table}).

\section{Acknowledgements}
We would like to thank Ronald DeVore, Robert Nowak, Rahul Parhi, and Hrushikesh Mhaskar for helpful discussions during the preparation of this manuscript. Jonathan W. Siegel was supported by the National Science Foundation (DMS-2424305 and CCF-2205004) as well as the MURI ONR grant N00014-20-1-2787. Tong Mao and Jinchao Xu are supported by the KAUST Baseline Research Fund.

\bibliographystyle{amsplain}
\bibliography{refs}
\end{document}